\newcommand{\W}{\mathbf{W}}
\title{Controlling changes to attention logits}
\author{%
Ben Anson\\
University of Bristol\\
\texttt{ben.anson@bristol.ac.uk}\\
\And
Laurence Aitchison\\
Mistral AI\\
\texttt{laurence.aitchison@gmail.com}\\
}
\begin{document}

\maketitle

\begin{abstract}
    Stability of neural network weights is critical when training transformer models. The query and key weights are particularly problematic, as they tend to grow large without any intervention. Applying normalization to queries and keys, known as `QK norm', fixes stability issues in practice, but is not always applicable. For example, QK norm is not compatible with Multi Latent Attention (MLA) because QK norm requires full materialization of queries and keys during inference, which is not done in MLA.
    In this paper we suggest that controlling the changes to logits is important for stability.
    We show that these changes are controllable by assigning parameter-dependent learning rates to the query and key weights. We find that our cheap intervention allows us to increase the base learning rate of the network, outperform other methods in the MLA setting, and achieve performance competitive with QK norm when using Multi-head Attention.
\end{abstract}

\section{Introduction}
Principled scaling of transformer models is crucial for efficiently training larger and more capable architectures. Maximal Update Parametrization ($\mu$P)~\citep{yang2022tensor} has emerged as a key technique in this area, enabling the transfer of optimal hyperparameters from smaller to larger models by carefully parameterizing the model. A core desideratum of $\mu$P is to control the magnitude of activations and their updates~\citep{dey2025don}, ensuring consistent training dynamics across different model widths. Regarding attention,~\cite{yang2022tensor} address attention logits blow-up as we increase model width by proposing a static attention scaling factor. While this static scaling helps control logit magnitude across different model widths, it does not address step-to-step changes in logits during longer training runs, which can become a major source of instability, particularly at high learning rates.

Attention logits are a well-known source of training instability~\citep{zhai2023stabilizing,team2025kimi}, prompting the development of interventions such as QK norm~\citep{henry2020query} and QK clip/MuonClip~\citep{team2025kimi} to ensure their stability. While QK norm is especially effective, it is ill-suited for Multi-head Latent attention (MLA)~\citep{liu2024deepseek}, as queries and keys are not fully materialized at inference-time for efficiency reasons~\citep{team2025kimi}. Other methods like QK clip require a bespoke attention  to track maximum attention logits, which can complicate integration into existing codebases.
Thus there is a gap for an easy-to-implement intervention that improves training stability, but is more widely applicable than QK norm.

We approach this gap with a $\mu$P-inspired desideratum for the attention logits. Instead of constraining logit magnitudes, like QK clip, MuonClip, and QK norm, we seek to control the {\it change\/} in logits as we train. This preserves expressivity, while reining-in instability. To validate our approach, we conduct pretraining experiments on a 1B parameter model. Our results demonstrate that our method is as stable as QK norm, particularly at high base learning rates. While not quite reaching the same peak performance as QK norm in the standard Multi-Head Attention (MHA) setting, our method is computationally cheaper and is applicable to MLA. When used with MLA our method enables higher base learning rates, outperforms QK clip, highlighting its practical value for training modern, efficient transformer architectures.

In summary, our contributions are as follows,
\begin{itemize}
\item We propose that change in logits is an important metric to account for when training attention modules. 
\item We show that we can control the change in logits by modulating the learning rate of query weight based on the norms of corresponding key weight, and vice versa.
\item We demonstrate that this change to the learning rate leads to better validation loss performance than alternatives when training with MLA.
\end{itemize}
\begin{figure}[t]
\centering
\includegraphics[scale=0.9]{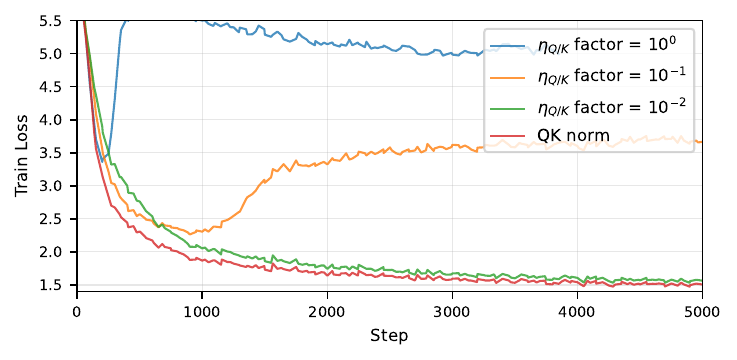}
\caption{\textbf{Learning rate of query/key matrices is a critical factor for transformer pretraining stability. } Here, 4 models are trained with a large base learning rate of $\eta = \mathtt{3e-2}$ for each parameter. Decreasing the learning rates of query and key weights alone (by a factor of $\eta_{Q/K}$), fully stabilizes pretraining. QK norm is shown to illustrate a stable baseline.}\label{fig:val_loss_large_high}
\end{figure}
\FloatBarrier

\section{Related Work}
Many have encountered instability issues when training transformers, and it has been studied extensively~\citep{liu2020understanding,dehghani2023scaling,henry2020query,team2025kimi,wortsman2023small,qi2023lipsformer,kim2025peri,zhai2023stabilizing,takase2023spike,rybakov2024methods}. Below we discuss the past literature relevant to our work.

\textbf{Controlling attention logits. } Training instabilities are often encountered in the attention layer itself. Attention logits may become large~\citep{team2025kimi}, potentially inducing collapse in attention entropy~\citep{zhai2023stabilizing}, where attention distributions become highly concentrated. QK normalization~\citep{henry2020query}, which applies normalization to query and key activations, has emerged as a simple and effective remedy, preventing large logits~\citep{dehghani2023scaling} and allowing larger learning rates~\citep{wortsman2023small}. Similar methods such as logit soft-capping apply normalization to logits directly~\citep{bello2016neural,team2024gemma}. Other methods normalize the weights rather than activations: $\sigma$Reparam~\citep{zhai2023stabilizing} parameterizes weights into a matrix and a scalar component, with the matrix having a maximum singular value of 1,  and a  weights rather than the activations; QK clip~\citep{team2025kimi} controls attention logits by clipping weights whenever the logits grow beyond a certain threshold.

\textbf{Parameter-specific learning rates. } While it is common to share the same learning rate across all parameters in a neural network, parameter-specific learning rates have been extensively examined~\citep{milsom2025functionspace,you2017large,liu2019variance,xu2019learning,wang2025sharpness,bernstein2020distance,qi2025taming,yang2023spectral}. Proposals often include adjusting the learning rate of a parameter according to the norm of step/gradient~\citep{yang2023spectral,liu2019variance}, as well as the parameter itself~\citep{qi2025taming}, such as LARS, LAMB, and Fromage~\citep{,bernstein2020distance,you2017large,you2019large}.

Our work selects parameter-specific learning rates that control changes to attention logits. However, by considering attention logits as a whole, our parameter-specific learning rates are `inter-parameter', unlike other methods, such as LARS, which consider each parameter tensor independently. Our method is also inspired by $\mu$P~\citep{yang2022tensor,dey2025don}; in $\mu$P, one of the desiderata is that as we make changes to our parameters in a network, the residual stream should correspondingly change in a controlled, `order 1-like' manner. Our work extends this notion to logits.
\section{Methods}\label{sec:methods}
Unlike other transformer modules, attention has quadratic structure. In particular, the attention logits are given by,
\begin{align}
\mathbf{L} = \frac{\mathbf{Q}\mathbf{K}^T}{\sqrt{d}} = \frac{\mathbf{X}\mathbf{W}_Q\mathbf{W}_K^T\mathbf{X}^T}{\sqrt{d}}.\label{eq:basic_attn}
\end{align}
Inspired by $\mu$P~\citep{yang2022tensor, dey2025don}, which (among other things) attempts to keep changes to {\it activations\/} roughly constant, we are interested in keeping the changes to {\it attention logits\/}, $\Delta \mathbf{L}$, under control. By a first-order analysis, we see that if the queries are large, then perturbations due to the keys will be amplified, and vice versa:
\begin{align}
\Delta\mathbf{L} = \frac{(\mathbf{Q } + \Delta\mathbf{Q})(\mathbf{K} + \Delta\mathbf{K})^T - \mathbf{Q}\mathbf{K}^T}{\sqrt{d}} 
\approx \frac{\mathbf{Q }(\Delta\mathbf{K})^T + (\Delta\mathbf{Q})\mathbf{K}^T}{\sqrt{d}}.
\end{align}
The main tool we have for controlling changes is the learning rate. Thus we propose to set the learning rates $\eta_{Q},\,\eta_{K}$ (for $\W_Q,$ and $\W_K$ respectively) such that $\mathbf{Q }(\Delta\mathbf{K})^T$ and $(\Delta\mathbf{Q})\mathbf{K}^T$ are both `order 1'.
We formalize this notion in Lemma~\ref{lemma:avgexpchange}.
\begin{restatable}{lemma}{avgexpchange}\label{lemma:avgexpchange}
Let $\mathbf{W}_Q,\mathbf{W}_K\in\mathbb{R}^{d_\text{model}\times d_\text{head}}$ be weight matrices corresponding to a particular attention head, and consider
the worst-case change in logits, for unit normed input,
\[
\max_{\|x\|_2=\|y\|_2=1}\lvert \Delta \ell\rvert:= \max_{\|x\|_2=\|y\|_2=1}
\lvert x^\top(\mathbf{W}+\Delta \mathbf{W})y-x^\top\mathbf{W}y\rvert,
\]
where \(\mathbf{W} = d_\text{head}^{-1/2}\mathbf{W}_Q^\top \mathbf{W}_K\). Suppose that the steps for $\mathbf{W}_Q$ and $\mathbf{W}_K$ are given by $\Delta\mathbf{W}_{Q/K} = -\eta_{Q/K} \mathbf{G}_{Q/K}$, where $\|\mathbf{G}_{Q/K}\|\leq D$ for some constant $D$ (which is the case for Adam and Muon).
If there is a constant $c$ such that $0 < c \leq \|\mathbf{W}_Q\|,\,\|\mathbf{W}_K\|$, and the learning rates satisfy $\eta_Q\propto \|\mathbf{W}_K\|^{-1}$, and $\eta_K\propto \|\mathbf{W}_Q\|^{-1}$, then the worst-case change in logits is bounded above independently of the weight size.
\end{restatable}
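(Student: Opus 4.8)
The plan is to reduce the worst-case logit change to an operator-norm bound and then exploit the fact that the proposed learning-rate coupling exactly absorbs the weight scale. First I would note that the change in logit is linear in $\Delta\mathbf{W}$, since $x^\top(\mathbf{W}+\Delta\mathbf{W})y - x^\top\mathbf{W}y = x^\top \Delta\mathbf{W}\,y$, and invoke the standard identity $\max_{\|x\|_2=\|y\|_2=1}|x^\top A y| = \|A\|_2$ (attained at the top singular vectors, with the upper bound following from Cauchy--Schwarz). The whole problem then reduces to bounding the operator norm $\|\Delta\mathbf{W}\|_2$, where $\mathbf{W}=d_\text{head}^{-1/2}\mathbf{W}_Q^\top\mathbf{W}_K$.

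Next I would expand $\Delta\mathbf{W}$ exactly:
\[
\Delta\mathbf{W} = d_\text{head}^{-1/2}\Big( \Delta\mathbf{W}_Q^\top\mathbf{W}_K + \mathbf{W}_Q^\top\Delta\mathbf{W}_K + \Delta\mathbf{W}_Q^\top\Delta\mathbf{W}_K\Big),
\]
the first two being the first-order terms and the last the second-order correction. Applying the triangle inequality and submultiplicativity of the operator norm, together with $\|\Delta\mathbf{W}_{Q/K}\| = \eta_{Q/K}\|\mathbf{G}_{Q/K}\| \le \eta_{Q/K}D$, leaves three products of norms to control. Substituting the prescription $\eta_Q = \alpha\|\mathbf{W}_K\|^{-1}$ and $\eta_K = \alpha\|\mathbf{W}_Q\|^{-1}$ (with $\alpha$ the proportionality constant), the key cancellation is that $\|\mathbf{W}_Q\|\,\|\Delta\mathbf{W}_K\| \le \|\mathbf{W}_Q\|\,\eta_K D = \alpha D$ and symmetrically $\|\Delta\mathbf{W}_Q\|\,\|\mathbf{W}_K\| \le \alpha D$. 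The scale of each matrix is precisely absorbed by the inverse-norm learning rate of its partner, so both first-order terms contribute a constant. This is the heart of the argument and formalizes the intuition that a large query should damp the key's learning rate, and vice versa.

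The one place that genuinely needs the extra hypothesis, and which I expect to be the main subtlety, is the second-order term: here $\|\Delta\mathbf{W}_Q\|\,\|\Delta\mathbf{W}_K\| \le \eta_Q\eta_K D^2 = \alpha^2 D^2/(\|\mathbf{W}_Q\|\,\|\mathbf{W}_K\|)$, which is bounded only if the weight norms stay away from zero. This is exactly where the assumption $0 < c \le \|\mathbf{W}_Q\|,\|\mathbf{W}_K\|$ enters, giving the bound $\alpha^2 D^2/c^2$; intuitively, without such a floor the coupled learning rates would themselves blow up as the norms shrink, so $c$ is what keeps the effective step sizes finite. Collecting the three contributions yields
\[
\|\Delta\mathbf{W}\|_2 \le d_\text{head}^{-1/2}\Big(2\alpha D + \alpha^2 D^2/c^2\Big),
\]
a constant independent of $\|\mathbf{W}_Q\|$ and $\|\mathbf{W}_K\|$, which completes the proof.
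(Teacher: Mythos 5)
Your proposal is correct and follows essentially the same route as the paper's proof: the same exact expansion of $\Delta\mathbf{W}$ into two first-order terms plus a quadratic term, the same submultiplicativity bounds with $\|\Delta\mathbf{W}_{Q/K}\|\le\eta_{Q/K}D$, the same cancellation of each weight's norm against its partner's inverse-norm learning rate, and the same use of the lower bound $c$ to control the quadratic term. The only cosmetic difference is that you phrase the reduction via the identity $\max_{\|x\|=\|y\|=1}|x^\top A y|=\|A\|_2$ up front, whereas the paper bounds each $|x^\top A y|\le\|x\|\,\|A\|\,\|y\|$ term by term.
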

For Lemma~\ref{lemma:avgexpchange} to apply, we need a constant $c$ such that $c\leq \|\mathbf{W}_{Q/K}\|$, but this is not unreasonable in practice. The Lemma does not specify a norm because all norms are equivalent, though in practice, we do need to pick a norm for an implementation. The most natural norm for restricting the maximum change to the logits is perhaps the spectral norm. In a preliminary experiment, we compared the performance of both Frobenius and spectral norm, with results shown in Figure~\ref{fig:spec_frob_comparison}. The benefits of using the spectral norm are very small, thus we opted to use the Frobenius norm for further experiments in Section~\ref{sec:experiments}. 

Following the Lemma we set,
\begin{align}\label{eq:quack_eqns}
\eta_Q \propto \|\W_K\|^{-1},\;\;\eta_K \propto \|\W_Q\|^{-1}.
\end{align}
In practice we treat the constant of proportionality in Eq.~\eqref{eq:quack_eqns} as a hyperparameter: at initialization, we  set the learning rate for each query and key weight to be equal to $\tau\eta$ and we tune $\tau$. Thus $\tau$ acts as a relative initial learning rate (relative to $\eta$, the base learning rate).

The above methodology applies to both the single- and multi-head (MHA) setting. In MHA, each head has its own query and key weight, so we apply~Eq.~\eqref{eq:quack_eqns} to each head separately. 
We summarize the resulting method in Algorithm~\ref{alg:mha_quack}.
\begin{algorithm}[t]
\caption{QuacK (MHA)}
\label{alg:mha_quack}
\begin{algorithmic}
\Require Hyperparameter $\tau$, base learning rate $\eta$
\Statex Make the following additions to the transformer training script:
\Statex
\Statex \textcolor{gray}{\#} \textcolor{blue}{At initialization.} \textcolor{gray}{ Calculate initial norms for query/key weights for all heads}
\ForAll{layers $\ell$}
  \ForAll{heads $h$}
    \State $\mathbf{W}_Q^{\ell,h}.\mathtt{init\_norm} \gets \| \mathbf{W}_Q^{\ell,h} \|$
    \State $\mathbf{W}_K^{\ell,h}.\mathtt{init\_norm} \gets \| \mathbf{W}_K^{\ell,h} \|$
  \EndFor
\EndFor
\Statex
\Statex \textcolor{gray}{\#} \textcolor{blue}{During training. }\textcolor{gray}{ Prior to each optimization step, adjust learning rates}
\ForAll{layers $\ell$}
  \ForAll{heads $h$}
    \State $\mathbf{W}_Q^{\ell,h}.\mathtt{lr} \gets \tau\,\eta \cdot 
      \dfrac{\mathbf{W}_K^{\ell,h}.\mathtt{init\_norm}}{\| \mathbf{W}_K^{\ell,h} \|}$
    \State $\mathbf{W}_K^{\ell,h}.\mathtt{lr} \gets \tau\,\eta \cdot 
      \dfrac{\mathbf{W}_Q^{\ell,h}.\mathtt{init\_norm}}{\| \mathbf{W}_Q^{\ell,h} \|}$
  \EndFor
\EndFor
\end{algorithmic}
\end{algorithm}

We extend to MLA using a similar approach in Appendix~\ref{appendix:mla}. A notable difference between MHA and MLA is that there are several more parameter matrices to consider; bounding the change in logits requires us to adjust the learning rate of each of these parameters. We detail exactly how to set the learning rates for MLA in Algorithm~\ref{alg:mla_mha}.
\section{Experiments}\label{sec:experiments}
\begin{figure}[t]
\centering
\includegraphics[]{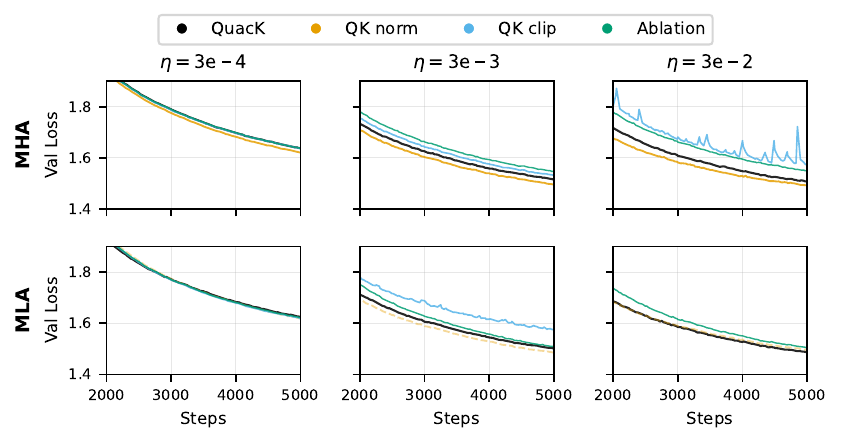}
\caption{Validation losses when training each method with $\mathtt{attn}\in\{\mathtt{MHA},\,\mathtt{MLA}\}$, and learning rates, $\eta \in\{\mathtt{3e-4}, \mathtt{3e-3}, \mathtt{3e-2}\}$. QK clip is unstable at high learning rates (it is omitted from the bottom right plot due to loss $\gg 2$). QK norm is overall the most performant, but it is not appropriate for use with MLA at inference-time for efficiency reasons (illustrated via dashed yellow line in the MLA row). QuacK is a sensible alternative, as it is stable in the high LR setting, performant, and applicable in the MLA setting. }\label{fig:plot_all}
\end{figure}
\begin{figure}[t]
\centering
\includegraphics[scale=0.9]{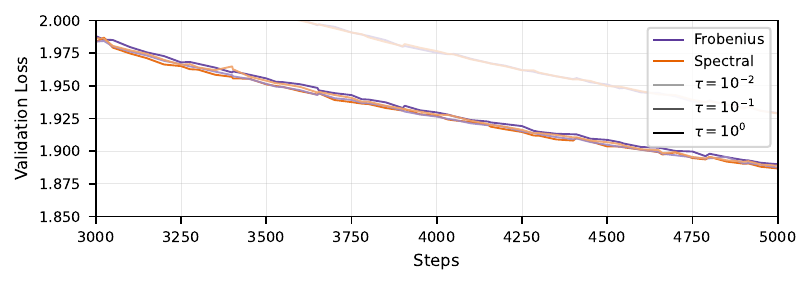}
\caption{Performance differences when applying Algorithm~\ref{alg:mha_quack} with different norms are small. We show validation losses when training a small model ($\sim 100$M parameters) with Algorithm~\ref{alg:mha_quack} to modulate the query and key weight learning rates. Different curves show results with different values of the hyperparameter $\tau$ and measuring the query and key weights with either Frobenius or spectral norm.
}\label{fig:spec_frob_comparison}
\end{figure}
To evaluate our method, we trained $\sim 1$B models based on Qwen3~\citep{yang2025qwen3} with both MHA~\citep{vaswani2017attention} and MLA~\citep{liu2024deepseek}. All models used $d_\text{model} = 2048$, $d_\text{ff} = 4d_\text{model}$,  $n_\text{head} = 32$, $n_\text{layer} = 14$, and were trained using gradient accumulation at $2048$ context length with 96 sequences per batch (i.e. 196608 tokens per batch) for 5000 steps, using data from the Cosmopedia-V2 subset of SmolLM-corpus~\citep{benallal2024smollmcorpus}. We used the GPT-2~\citep{radford2019language} tokenizer, with vocab size 49152 and embedding/unembedding weight tying. The MHA models were trained with $d_\text{head} = 64$, while the MLA models were trained with $d_\text{head} = 128$ (with $d_\text{rope} =d_\text{nope} =  64$). MLA also used latent dimensions of $d_\text{cq} = 512$ for the queries and $d_\text{ckv} = 256$ for the keys and values. We trained using Muon~\citep{jordan2024muon}, with constant LR schedule and 500 warmup steps. All pretraining runs were completed on 4xGH200 nodes at bfloat16 precision.

Our experiments varied the attention method, $\mathtt{attn}\in\{\mathtt{MHA},\, \mathtt{MLA}\}$, the base learning rate $\eta \in \{\mathtt{3e-4},\, \mathtt{3e-3},\,\mathtt{3e-2}\}$, as well as the attention logit interventions:
\begin{itemize}

\item {\it QK norm\/} applies RMS norm (with learned scaling) to both queries and keys before applying the attention operation.
\item {\it QK clip\/} implements Algorithm 1 from~\cite{team2025kimi}, where after each optimization step we multiply/clip certain weights using either $\sqrt{\gamma}$ or $\gamma$. We set $\gamma = \min\{1, \tau_{\mathtt{QK\ clip}} / S_\text{max}^h\}$, where $\tau_\mathtt{QK\ clip}$ is a hyperparameter denoting a threshold for the maximum logit, and $S_\text{max}^h$ is the largest logit value seen by the $h$'th head since the last optimization step. For MHA we use $\mathbf{W}_{Q/K}^h \leftarrow \sqrt{\gamma}\;\mathbf{W}_{Q/K}^h$; for MLA we use $\mathbf{W}_\text{uq/uk} \leftarrow\sqrt{\gamma}\;\mathbf{W}_\text{uq/uk}$
and $\mathbf{W}_\text{qr} \leftarrow\gamma\;\mathbf{W}_\text{qr}$. We swept over $\tau_\mathtt{QK\ clip}\in\{30,100\}$, the values used in the original paper.
\item {\it Ablation\/} multiplies learning rates for query and key weights by a value $\tau$, which is swept over, $\tau\in\{10^{-2},10^{1}, 10^{0}, 10^{1}\}$. For MHA we set $\eta_Q^h = \eta_K^h = \tau \cdot \eta$. For MLA, we set $\eta_\text{uq}^h =
\eta_\text{dq} = 
\eta_\text{qr}^h = 
\eta_\text{uk}^h = 
\eta_\text{dkv} = 
\eta_\text{kr} = \tau \cdot \eta$.
\item {\it QuacK\/} also multiplies learning rates for query and key weights according to a value $\tau$, which is swept over, $\tau\in\{10^{-2},10^{1}, 10^{0}, 10^{1}\}$. For MHA we use Algorithm~\ref{alg:mha_quack}, and for MLA we use Algorithm~\ref{alg:mla_mha}. All weight norms are calculated using the Frobenius norm.
\end{itemize}
\textbf{Higher learning rates are better. } Figure~\ref{fig:plot_all}, left column, shows that at the low learning rate of $\eta = \mathtt{3e-4}$, all logit interventions perform similarly, but with QK norm performing marginally better.
The lack of variety in performance is likely due to the fact the learning rate is small enough that we don't encounter instabilities. However, performance is much improved by increasing the learning rate (column 2, 3).

\textbf{QuacK maintains stability and strong performance, enabling higher base learning rates. }
QK clip is insufficient to prevent instabilities at the highest base learning rate of $\eta=\mathtt{3e-2}$ (column 3, Figure~\ref{fig:plot_all}), and it underperforms, especially in the  MLA setting, when $\eta=\mathtt{3e-3}$ (column 2).
The ablation, which sets the learning rates for query and key weights to smaller fixed values, is stable, but underperforms QuacK in both the MHA and MLA settings. QuacK has similar but slightly worse performance compared to QK norm in the MHA setting, but is the best performing method in the MLA setting (we include QK norm results in the MLA setting for comparison, but it is not viable at inference-time like the other methods).

\textbf{QuacK controls the maximum logit as well as change in logits. } During our training runs, we used a dedicated subset of our data to periodically record logit statistics. We confirm that QuacK controls the quantity we expect (average absolute change in logits) in Figure~\ref{fig:logit_stats}. We also see that either intervening with QK norm or controlling the learning rate appears sufficient to control the maximum logits. We include the `default' model (i.e. using a base learning rate of $\eta = \mathtt{3e-3}$ for all parameters), to illustrate behaviour of an unstable run (we killed this run prematurely, as it never got below 4.5 loss after 1.5k steps).
\begin{figure}[t]
\centering
\includegraphics[scale=0.9]{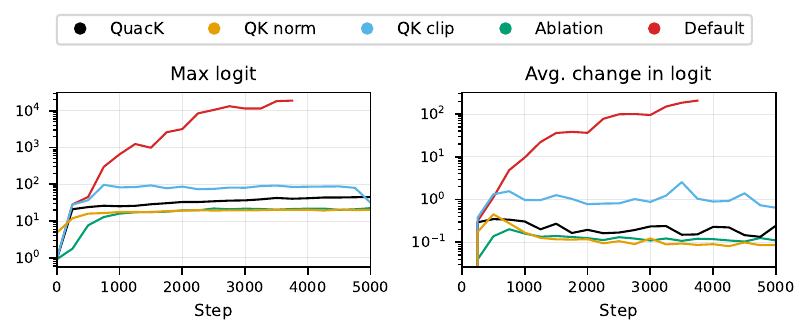}
\caption{Max logit (left) and average absolute change in logit throughout training (right) with a base learning rate of $\eta = \mathtt{3e-3}$. Here we show the middle head of the middle layer (head 16 and layer 8) while training with MLA.}\label{fig:logit_stats}
\end{figure}

\textbf{QuacK gives a performance boost over QK norm. }
QuacK yields a speedup over QK norm by removing two RMS norm computations per attention block; in practice we observed $\sim$10\% faster training. QuacK was also faster than QK clip, though this comparison is slightly unfair because we did not use an optimized QK clip implementation. QK clip can be efficient, but requires custom attention code to efficiently accumulate the maximum logit.
\section{Limitations}
Our experiments used a single model based on Qwen3~\citep{yang2025qwen3}. Due to compute constraints, we were unable to execute longer training runs and with larger models to demonstrate that our method is widely applicable. As such, results are limited by short training durations (5k steps) and a single dataset and model architecture.
\section{Conclusion}
In this work, we introduced a simple and principled approach for stabilizing attention training by controlling changes in attention logits rather than their magnitude. Our analysis showed that the expected logit change can be bounded through parameter-dependent learning rates for the query and key weights, inspired by µP-style scaling principles. Empirically, our method QuacK enables the use of higher base learning rates while maintaining stability comparable to QK norm and outperforming alternative methods such as QK clip, particularly in the Multi-Latent Attention (MLA) setting where QK norm is inappropriate.

Our results demonstrate that stability in attention can be achieved without introducing additional normalization layers or specialized kernels, making QuacK a practical drop-in improvement for transformer training.
\section{Acknowledgements}
We thank Edward Milsom for insightful discussions.
\clearpage
\bibliography{refs}
\bibliographystyle{icml2024}

\clearpage
\appendix
\section{Proof of Lemma~\ref{lemma:avgexpchange}}\label{appendix:proof_lemma}
\avgexpchange*
\begin{proof}
The change in logits is given by,
\begin{align}
d_\text{head}^{1/2}|\Delta \ell| &= \lvert(q + \Delta q)^T (k + \Delta k) - q^T k\rvert \nonumber \\
&= \lvert(\Delta q)^T k + q^T \Delta k + (\Delta q)^T \Delta k\rvert \nonumber \\
&\leq \lvert (\Delta q)^T k\rvert + \lvert q^T \Delta k\rvert + \|\Delta q\| \|\Delta k\| \label{eq:exp_mha_delta_ell}.
\intertext{where,}
q &= \mathbf{W}_Q x,\,k = \mathbf{W}_K y
\end{align}
The query and key perturbations are given by,
\begin{align}
\Delta q &= (\mathbf{W}_Q + \Delta \mathbf{W}_Q)x - \mathbf{W}_Q x = \Delta \mathbf{W}_Q x, \\
\Delta k &= (\mathbf{W}_K + \Delta \mathbf{W}_K)y - \mathbf{W}_K y = \Delta \mathbf{W}_K y.
\end{align}

We now bound the first order terms in Eq.~\eqref{eq:exp_mha_delta_ell}, assuming inputs are unit normed,
\begin{subequations}\label{eq:first_order_mha_bounds}
\begin{align}
\lvert (\Delta q)^T k\rvert &= \lvert (\Delta \mathbf{W}_Q x)^T (\mathbf{W}_K y) \rvert = \lvert x^T \Delta \mathbf{W}_Q^T \mathbf{W}_K y \rvert \nonumber \\
&\leq \|x\| \|\Delta \mathbf{W}_Q^T \mathbf{W}_K\| \|y\| \nonumber \\
&\leq \eta_Q D \|\mathbf{W}_K\|, \\[10pt]
\lvert q^T \Delta k\rvert &= \lvert (\mathbf{W}_Q x)^T (\Delta \mathbf{W}_K y) \rvert = \lvert x^T \mathbf{W}_Q^T \Delta \mathbf{W}_K y \rvert \nonumber \\
&\leq \|x\| \|\mathbf{W}_Q ^T\Delta \mathbf{W}_K\| \|y\| \nonumber \\
&\leq \eta_K D \|\mathbf{W}_Q\|.
\end{align}
\end{subequations}
For some constants $\tau_Q$, $\tau_K$, set,
\begin{subequations}\label{eq:mha_lrs}
\begin{align}
\eta_Q &= \tau_Q \|\mathbf{W}_K\|^{-1} \\
\eta_K &= \tau_K \|\mathbf{W}_Q\|^{-1}.
\end{align}
\end{subequations}
Substituting these into Eqs.~\eqref{eq:first_order_mha_bounds}, we obtain the bounds,
\begin{align}
\lvert (\Delta q)^T k\rvert &\leq (\tau_Q \|\mathbf{W}_K\|^{-1}) D \|\mathbf{W}_K\| = \tau_Q D, \\
\lvert q^T \Delta k\rvert &\leq (\tau_K \|\mathbf{W}_Q\|^{-1}) D \|\mathbf{W}_Q\| = \tau_K D.
\end{align}
Note that even if RoPE is applied, such that $q = R_x \mathbf{W}_Q x$, the bound remains identical as $\|R \mathbf{W}\| = \|\mathbf{W}\|$ (if the Frobenius or spectral is used).

Finally, we consider the quadratic term $\|\Delta q\|\|\Delta k\|$,
\begin{align}
\|\Delta q\| \|\Delta k\| &\leq \|\Delta \mathbf{W}_Q\| \|\Delta \mathbf{W}_K\| \nonumber \\
&= \frac{\tau_Q\tau_K\|\mathbf{G}_Q\|\|\mathbf{G}_K\|}{\|\mathbf{W}_Q\|\|\mathbf{W}_K\|} \nonumber \\
&\leq \frac{\tau_Q\tau_K D^2}{c^2}.
\end{align}
Thus the change in logits is bounded by a constant.
\end{proof}
\section{Extension to MLA}\label{appendix:mla}
In this section we motivate Algorithm~\ref{alg:mla_mha}, specifically the factors associated with each weight
\FloatBarrier
\begin{algorithm}[h]
\caption{QuacK (MLA)}
\label{alg:mla_mha}
\begin{algorithmic}
\Require Hyperparameter $\tau$, base learning rate $\eta$
\Statex Make the following additions to the transformer training script:
\Statex
\Function{$\mathtt{compute\_lr\_factors()}$}{}
  \ForAll{layers $\ell$}
    \ForAll{heads $h$}
      \State $\mathbf{W}_{\text{uq}}^{\ell,h}.\mathtt{factor} \gets (\|\mathbf{W}_{\text{dq}}^{\ell}\|\,\|\mathbf{W}_{\text{uk}}^{\ell,h}\|\,\|\mathbf{W}_{\text{dkv}}^{\ell}\|)^{-1}$
      \State $\mathbf{W}_{\text{uk}}^{\ell,h}.\mathtt{factor} \gets (\|\mathbf{W}_{\text{uq}}^{\ell,h}\|\,\|\mathbf{W}_{\text{dq}}^{\ell}\|\,\|\mathbf{W}_{\text{dkv}}^{\ell}\|)^{-1}$
      \State $\mathbf{W}_{\text{qr}}^{\ell,h}.\mathtt{factor} \gets (\|\mathbf{W}_{\text{dq}}^{\ell}\|\,\|\mathbf{W}_{\text{kr}}^{\ell}\|)^{-1}$
    \EndFor
    \State $\mathbf{W}_{\text{dq}}^{\ell}.\mathtt{factor} \gets 
      \min\!\Big\{
        (\max_h\|\mathbf{W}_{\text{uq}}^{\ell,h}\|\|\mathbf{W}_{\text{uk}}^{\ell,h}\|\|\mathbf{W}_{\text{dkv}}^{\ell}\|)^{-1},\,
        (\max_h\|\mathbf{W}_{\text{qr}}^{\ell,h}\|\|\mathbf{W}_{\text{kr}}^{\ell}\|)^{-1}
      \Big\}$
    \State $\mathbf{W}_{\text{dkv}}^{\ell}.\mathtt{factor} \gets (\max_h\|\mathbf{W}_{\text{uq}}^{\ell,h}\|\|\mathbf{W}_{\text{dq}}^{\ell}\|\|\mathbf{W}_{\text{uk}}^{\ell,h}\|)^{-1}$
    \State $\mathbf{W}_{\text{kr}}^{\ell}.\mathtt{factor} \gets (\max_h\|\mathbf{W}_{\text{qr}}^{\ell,h}\|\|\mathbf{W}_{\text{dq}}^{\ell}\|)^{-1}$
  \EndFor
\EndFunction

\Statex
\State $\{\mathtt{attention\_weights}\} \gets \{\mathbf{W}_{\text{uq}}^{\ell,h},\,\mathbf{W}_{\text{uk}}^{\ell,h},\,\mathbf{W}_{\text{qr}}^{\ell,h},\, \mathbf{W}_\text{dq}^\ell, \,\mathbf{W}_{\text{dkv}}^\ell, \, \mathbf{W}_\text{kr}^\ell\text{ for all layers } \ell \text{ for all heads } h   \}$

\Statex
\Statex \textcolor{gray}{\#} \textcolor{blue}{At initialization.} \textcolor{gray}{ Compute initial learning rate factors for all attention weights}
\State $\mathtt{compute\_lr\_factors}()$
\ForAll{$\mathbf{W}$ in $\{\mathtt{attention\_weights}\}$}
\State $\mathbf{W}.\mathtt{init\_factor} \gets \mathbf{W}.\mathtt{factor}$
\EndFor

\Statex
\Statex \textcolor{gray}{\#} \textcolor{blue}{During training. }\textcolor{gray}{ Prior each optimization step, adjust learning rates}
\State $\mathtt{compute\_lr\_factors}()$
\ForAll{$\mathbf{W}$ in $\{\mathtt{attention\_weights}\}$}
\State $\mathbf{W}.\mathtt{lr} \gets \tau\,\eta \cdot 
  \dfrac{\mathbf{W}.\mathtt{factor}}{\mathbf{W}.\mathtt{init\_factor}}$
\EndFor
\end{algorithmic}
\end{algorithm}
We use a similar approach to Section~\ref{sec:methods} / Appendix~\ref{appendix:proof_lemma} when extending to MLA~\citep{ji2025towards,liu2024deepseek}. For now, assume the single-head setting. MLA tells us to calculate queries and keys as follows,
\begin{subequations}\label{eq:mla_def}
\begin{align}
q &=  \text{Concat}(q_\text{nope}, q_\text{rope})\\
 k &=  \text{Concat}(k_\text{nope}, k_\text{rope})\\
q_\text{nope}  &= \W_\text{uq} \W_\text{dq} x\\
q_\text{rope}  &= R_x(\W_\text{qr} \W_\text{dq} x)\\
c_\text{kv} &= \W_\text{dkv} y\\
k_\text{nope}  &= \W_\text{uk}c_\text{kv} = \W_\text{uk}\W_\text{dkv} y\\
k_\text{rope}  &= R_y(\W_\text{kr} y).
\end{align}
\end{subequations}
Here, $x$ and $y$ are two token embeddings. The `down' matrices, $\mathbf{W}_\text{dq}$ and $\mathbf{W}_\text{dkv}$, project queries and keys/values respectively down to a lower dimensional latent space. This enables efficient caching of $c_\text{kv}$. The `up' matrices $\mathbf{W}_\text{uq}$, $\mathbf{W}_\text{uk}$ project these latents up to a higher dimensional space for attention calculations on each head. The $\mathbf{W}_\text{qr}$ and $\mathbf{W}_\text{kr}$ matrices are used to produce decoupled queries and keys for RoPE~\citep{su2021roformer} embeddings, with the position embedding applied via the rotation matrices $R_x$ and $R_y$.

The change in logits is given by,
\begin{align}
d_\text{head}^{1/2}|\Delta \ell| &= \lvert(q + \Delta q)^T (k + \Delta k) - q^T k\rvert = \lvert(\Delta q)^T k + q^T \Delta k +  (\Delta q)^T \Delta k\rvert, \\
\intertext{and we can bound the change,}
d_\text{head}^{1/2}|\Delta \ell|&\leq  \lvert (\Delta q)^T k\rvert  + \lvert q^T \Delta k\rvert + \|\Delta q\| \|\Delta k\|\nonumber\\
&\leq \lvert (\Delta q_\text{nope})^T k_\text{nope}\rvert +
\lvert (\Delta q_\text{rope})^T k_\text{rope}\rvert + 
\lvert q_\text{nope}^T \Delta k_\text{nope}\rvert + 
\lvert q_\text{rope}^T \Delta k_\text{rope}\rvert + \|\Delta q\| \|\Delta k\|\label{eq:exp_mla_delta_ell_first}.
\end{align}
Expanding further, for the queries, we have,
\begin{subequations}
\begin{align}
\Delta q_\text{nope} &= (\W_\text{uq} + \Delta\W_\text{uq})(\W_\text{dq} + \Delta\W_\text{dq})x - \W_\text{uq}\W_\text{dq}x\nonumber\\
&= \Delta\W_\text{uq}\W_\text{dq}x + \W_\text{uq}\Delta\W_\text{dq}x + \Delta \mathbf{W}_\text{uq}\Delta\mathbf{W}_\text{dq} x,\\
\Delta q_\text{rope} &= R_x[(\W_\text{qr} + \Delta\W_\text{qr})(\W_\text{dq} + \Delta\W_\text{dq})x - \W_\text{qr}\W_\text{dq}x]\nonumber\\
&= R_x[\Delta\W_\text{qr}\W_\text{dq}x + \W_\text{qr}\Delta\W_\text{dq}x + \Delta \mathbf{W}_\text{qr}\Delta \mathbf{W}_\text{dq} x],
\end{align}
\end{subequations}
and for the keys,
\begin{subequations}
\begin{align}
\Delta k_\text{nope} &= (\W_\text{uk} + \Delta\W_\text{uk})(\W_\text{dkv} + \Delta\W_\text{dkv})y - \W_\text{uk}\W_\text{dkv}y\nonumber\\
&= \Delta\W_\text{uk}\W_\text{dkv}y + \W_\text{uk}\Delta\W_\text{dkv}y + \Delta \mathbf{W}_\text{uk}\Delta \mathbf{W}_\text{dkv}y\\
\Delta k_\text{rope} &= R_y[(\W_\text{kr} + \Delta\W_\text{kr})y - \W_\text{kr}y] = R_y\Delta\W_\text{kr}y.
\end{align}
\end{subequations}
We now use these expressions, and the expressions for $q_\text{nope},\,k_\text{nope}, q_\text{rope}, \,k_\text{rope}$, to bound each of the terms in Eq.~\eqref{eq:exp_mla_delta_ell_first}.
We will make some assumptions (similar to Lemma~\ref{lemma:avgexpchange},
\begin{itemize}
\item the inputs $x$ and $y$ are unit normed;
\item we use the Frobenius norm;
\item conditioned gradients are bounded by a constant, i.e. $\Delta \mathbf{W}_\mathtt{x} = -\eta_\mathtt{x}\mathbf{G}_\mathtt{x}$ where $\|\mathbf{G}_\mathtt{x}\|\leq D$ (valid for Muon and Adam);
\item the weight norms are lower bounded by a constant $c$.
\end{itemize}
We consider the first order terms. We have,
\begin{subequations}\label{eq:first_order_mla_bounds}
\begin{align}
\lvert \Delta q_\text{nope}^T k_\text{nope}\rvert 
&= \lvert (\Delta\W_\text{uq}\W_\text{dq}x + \W_\text{uq}\Delta\W_\text{dq}x + \Delta \mathbf{W}_\text{uq}\Delta\mathbf{W}_\text{dq} x)^T k_\text{nope} \rvert \nonumber \\
&\leq \|\Delta\W_\text{uq}\|\|\W_\text{dq}\|\|k_\text{nope}\| + \|\W_\text{uq}\|\|\Delta\W_\text{dq}\|\|k_\text{nope}\| + \|\Delta \mathbf{W}_\text{uq}\|\|\Delta\mathbf{W}_\text{dq}\| \|k_\text{nope}\|\nonumber\\
&\leq \eta_\text{uq}D\|\W_\text{dq}\|\|\W_\text{uk}\|\|\W_\text{dkv}\| + \eta_\text{dq}D\|\W_\text{uq}\|\|\W_\text{uk}\|\|\W_\text{dkv}\| + O(\eta_{\text{uq}}\eta_\text{dq} \|\mathbf{W}_\text{uk}\|\|\mathbf{W}_\text{dkv}\|)\\
\lvert (\Delta q_\text{rope})^T k_\text{rope}\rvert
&= \lvert (R_x[\Delta\W_\text{qr}\W_\text{dq}x + \W_\text{qr}\Delta\W_\text{dq}x + \Delta \mathbf{W}_\text{qr}\Delta \mathbf{W}_\text{dq} x])^T k_\text{rope} \rvert \nonumber \\
&\leq (\|\Delta\W_\text{qr}\|\|\W_\text{dq}\| + \|\W_\text{qr}\|\|\Delta\W_\text{dq}\| + \|\Delta \mathbf{W}_\text{qr}\|\|\Delta \mathbf{W}_\text{dq}\|)\|k_\text{rope}\| \nonumber \\
&\leq \eta_\text{qr}D\|\W_\text{dq}\|\|\W_\text{kr}\| + \eta_\text{dq}D\|\W_\text{qr}\|\|\W_\text{kr}\| + O(\eta_\text{dq}\eta_\text{qr} \|\mathbf{W}_\text{kr}\|),\\[10pt]
\lvert q_\text{nope}^T \Delta k_\text{nope}\rvert
&= \lvert q_\text{nope}^T (\Delta\W_\text{uk}\W_\text{dkv}y + \W_\text{uk}\Delta\W_\text{dkv}y + \Delta \mathbf{W}_\text{uk}\Delta \mathbf{W}_\text{dkv}y) \rvert \nonumber \\
&\leq \|q_\text{nope}\|\|\Delta\W_\text{uk}\|\|\W_\text{dkv}\| + \|q_\text{nope}\|\|\W_\text{uk}\|\|\Delta\W_\text{dkv}\| + \|q_\text{nope}\|\|\Delta \mathbf{W}_\text{uk}\|\|\Delta \mathbf{W}_\text{dkv}\| \nonumber \\
&\leq \eta_\text{uk}D\|\W_\text{uq}\|\|\W_\text{dq}\|\|\W_\text{dkv}\| + \eta_\text{dkv}D\|\W_\text{uq}\|\|\W_\text{dq}\|\|\W_\text{uk}\| + O(\eta_\text{uk}\eta_\text{dkv} \|\mathbf{W}_\text{uq}\|\|\mathbf{W}_\text{dq}\|),\\[10pt]
\lvert q_\text{rope}^T \Delta k_\text{rope}\rvert
&= \lvert q_\text{rope}^T (R_y\Delta\W_\text{kr}y) \rvert
\leq \|q_\text{rope}\| \|\Delta\W_\text{kr}\|\leq \eta_\text{kr}D\|\W_\text{qr}\|\|\W_\text{dq}\|.
\end{align}
\end{subequations}
We used the fact that for rotation matrices $R$, $\|\mathbf{W}R\| =\|R\mathbf{W}\| = \|\mathbf{W}\|$.

Ultimately, Eqs.~\eqref{eq:first_order_mla_bounds} suggest to set the learning rates for each attention weight parameter as,
\begin{subequations}\label{eq:mla_single_head_lrs}
\begin{align}
\eta_\text{uq} &= \tau(\|\W_\text{dq}\|\|\W_\text{uk}\|\|\W_\text{dkv}\|)^{-1} \\
\eta_\text{dq} &= \tau \,\min\left\{(\|\W_\text{uq}\|\|\W_\text{uk}\| \|\W_\text{dkv}\|)^{-1}, (\|\W_\text{qr}\|\|\W_\text{kr}\|)^{-1}\right\}\\
\eta_\text{qr} &= 	\tau(\|\W_\text{dq}\|\|\W_\text{kr}\|)^{-1}\\
\eta_\text{uk} &= 	\tau(\|\W_\text{uq}\|\|\W_\text{dq}\|\|\W_\text{dkv}\|)^{-1}\\
\eta_\text{dkv} &= \tau(\|\W_\text{uq}\|\|\W_\text{dq}\|\|\W_\text{uk}\|)^{-1}\\
\eta_\text{kr} &= 	\tau(\|\W_\text{qr}\|\|\W_\text{dq}\|)^{-1}.
\end{align}
\end{subequations}
We then substitute these learning rates into Eqs.~\eqref{eq:first_order_mla_bounds}, to see that the bounds are given by,
\begin{subequations}
\begin{align}
\lvert (\Delta q_\text{nope})^T k_\text{nope}\rvert &\leq \tau D + \tau D + O\left(\frac{\tau^2 \|\mathbf{W}_\text{uk}\|\|\mathbf{W}_\text{dkv}\|}{\|\W_\text{dq}\|\|\W_\text{uk}\|\|\W_\text{dkv}\| \cdot \|\W_\text{uq}\|\|\W_\text{uk}\| \|\W_\text{dkv}\|}\right) \nonumber\\
&= 2\tau D + O\left(\frac{\tau^2}{\|\W_\text{dq}\|\|\W_\text{uq}\|\|\W_\text{uk}\|\|\W_\text{dkv}\|}\right),\\
\lvert (\Delta q_\text{rope})^T k_\text{rope}\rvert
&\leq \tau D + \tau D + O\left(\frac{\tau^2 \|\mathbf{W}_\text{kr}\|}{\|\W_\text{dq}\|\|\W_\text{kr}\| \cdot \|\W_\text{qr}\|\|\W_\text{kr}\|}\right) \nonumber\\
&= 2\tau D + O\left(\frac{\tau^2}{\|\W_\text{dq}\|\|\W_\text{qr}\|\|\W_\text{kr}\|}\right),\\
\lvert q_\text{nope}^T \Delta k_\text{nope}\rvert 
&\leq \tau D + \tau D + O\left(\frac{\tau^2 \|\mathbf{W}_\text{uq}\|\|\mathbf{W}_\text{dq}\|}{\|\W_\text{uq}\|\|\W_\text{dq}\|\|\W_\text{dkv}\| \cdot \|\W_\text{uq}\|\|\W_\text{dq}\|\|\W_\text{uk}\|}\right) \nonumber\\
&= 2\tau D + O\left(\frac{\tau^2}{\|\W_\text{uq}\|\|\W_\text{dq}\|\|\W_\text{uk}\|\|\W_\text{dkv}\|}\right),\\
\lvert q_\text{rope}^T \Delta k_\text{rope}\rvert &\leq \tau D.
\end{align}
\end{subequations}
It is reasonable to assume in practice that the weights are not arbitrarily small (i.e. their norm is lower bounded), and thus that these terms are bounded by a constant.

The only remaining term to bound in Eq.~\eqref{eq:exp_mla_delta_ell_first} is the quadratic term, $\|\Delta q\|\|\Delta k\|$. We can show that this is bounded by showing that the individual parts are bounded,
\begin{subequations}
\begin{align}
\|\Delta q_\text{nope}\| &\leq \eta_\text{uq}D\|\W_\text{dq}\| + \eta_\text{dq}D\|\W_\text{uq}\| + \eta_\text{uq}\eta_\text{dq}D^2 \nonumber\\
&\leq \frac{2\tau D}{\|\W_\text{uk}\|\|\W_\text{dkv}\|} + \eta_\text{uq}\eta_\text{dq}D^2,\\[10pt]
\|\Delta q_\text{rope}\| &\leq \eta_\text{qr}D\|\W_\text{dq}\| + \eta_\text{dq}D\|\W_\text{qr}\| + \eta_\text{qr}\eta_\text{dq}D^2 \nonumber\\
&\leq \frac{2\tau D}{\|\W_\text{kr}\|} + \eta_\text{qr}\eta_\text{dq}D^2,\\
\|\Delta k_\text{nope}\| &\leq \eta_\text{uk}D\|\W_\text{dkv}\| + \eta_\text{dkv}D\|\W_\text{uk}\| + \eta_\text{uk}\eta_\text{dkv}D^2 \nonumber\\
&\leq \frac{2\tau D}{\|\W_\text{uq}\|\|\W_\text{dq}\|} + \eta_\text{uk}\eta_\text{dkv}D^2,\\[10pt]
\|\Delta k_\text{rope}\| &\leq \eta_\text{kr}D \leq \frac{\tau D}{\|\W_\text{qr}\|\|\W_\text{dq}\|}.
\end{align}
\end{subequations}
To extend further to the multi-head setting, we add head indices to the necessary matrices, $\mathbf{W}^h_\text{uq},\, \mathbf{W}^h_\text{uk},$ and $\mathbf{W}^h_\text{qr}$, and their corresponding learning rates, $\eta^h_\text{uq},\,\eta^h_\text{uk},\,\eta^h_\text{qr}$. The key used for RoPE, $k_\text{rope}$ is shared between all heads, therefore $\mathbf{W}_\text{kr}$ surprisingly does not have a head index. The down matrices project to a latent space, so also do not have head indices.
Plugging these into Eqs.~\eqref{eq:mla_single_head_lrs} we have,
\begin{subequations}\label{eq:mla_multihead}
\begin{align}
\eta_\text{uq}^h &= \tau(\|\W_\text{dq}\|\|\W_\text{uk}^h\|\|\W_\text{dkv}\|)^{-1}\\
\eta_\text{dq} &= \tau \, \min\left\{\left(\max_h\|\W_\text{uq}^h\|\|\W_\text{uk}^h\| \|\W_\text{dkv}\|\right)^{-1},  \left(\max_h\|\W_\text{qr}^h\|\|\W_\text{kr}\|\right )^{-1}\right\}\\
\eta_\text{qr}^h &= 	\tau(\|\W_\text{dq}\|\|\W_\text{kr}\|)^{-1}\\
\eta_\text{uk}^h &= 	\tau(\|\W_\text{uq}^h\|\|\W_\text{dq}\|\|\W_\text{dkv}\|)^{-1}\\
\eta_\text{dkv} &= \tau(\max_h\|\W_\text{uq}^h\|\|\W_\text{dq}\|\|\W_\text{uk}^h\|)^{-1}\\
\eta_\text{kr} &= 	\tau(\max_h\|\W_\text{qr}^h\|\|\W_\text{dq}\|)^{-1}.
\end{align}
\end{subequations}
The use of $\max_h (\cdot)$ comes from the requirement that we want logit changes to be bounded for all heads.



\end{document}